\newcommand\norm[1]{\left\lVert#1\right\rVert}
\newcommand\abs[1]{\left\lvert#1\right\rvert}
\newcommand\prs[1]{\left(#1\right)}
\newcommand\sbk[1]{\left[#1\right]}
\newcommand{\defeq}{\stackrel{\Delta}{=}}
\newcommand{\E}{\mathbb{E}}
\newcommand{\X}{\mathcal{X}}
\newcommand{\R}{\mathbb{R}}
\newcommand{\gti}{\rightarrow\infty}
\newcommand{\gtz}{\rightarrow0}
\newcommand{\var}{\mathrm{Var}}
\newtheorem{theorem}{Theorem}
\theoremstyle{definition}
\newtheorem*{example}{Example}
\theoremstyle{remark}
\newtheorem*{remark}{Remark}
\title{Deep Learning for Computing Convergence Rates of Markov Chains}
\author{%
  Yanlin Qu, Jose Blanchet, Peter Glynn\\
  Department of Management Science and Engineering\\
  Stanford University\\
  Stanford, CA 94305, USA\\
  \texttt{\{quyanlin, jose.blanchet, glynn\}@stanford.edu} \\
}
\begin{document}

\maketitle

\begin{abstract}
Convergence rate analysis for general state-space Markov chains is fundamentally important in areas such as Markov chain Monte Carlo and algorithmic analysis (for computing explicit convergence bounds). This problem, however, is notoriously difficult because traditional analytical methods often do not generate practically useful convergence bounds for realistic Markov chains. We propose the Deep Contractive Drift Calculator (DCDC), the first general-purpose sample-based algorithm for bounding the convergence of Markov chains to stationarity in Wasserstein distance. The DCDC has two components. First, inspired by the new convergence analysis framework in \citep{qu2023computable}, we introduce the Contractive Drift Equation (CDE), the solution of which leads to an explicit convergence bound. Second, we develop an efficient neural-network-based CDE solver. Equipped with these two components, DCDC solves the CDE and converts the solution into a convergence bound. We analyze the sample complexity of the algorithm and further demonstrate the effectiveness of the DCDC by generating convergence bounds for realistic Markov chains arising from stochastic processing networks as well as constant step-size stochastic optimization.
\end{abstract}

\section{Introduction}
General state-space Markov chains are indispensable in a wide array of fields due to their flexibility and applicability in modeling random dynamical systems. To analyze the long-term behavior of these Markovian models, estimating the rate of convergence to equilibrium is critical. When designing reliable real-world systems (e.g. cloud platforms and manufacturing lines), the faster the convergence, the faster the recovery after disturbances. When designing efficient sample-based algorithms (e.g. stochastic gradient descent (SGD) variants and MCMC), the faster the convergence, the faster the goal attainment. The rate of convergence also appears in MDP-related sample complexity results under the name "mixing time". Although convergence rate estimation is critically important, estimating the convergence rate of even a mildly complex chain can be extremely difficult. 

Over the last three decades, significant efforts have been made to bound the convergence of general state-space Markov chains. Most of these works utilize a pair of drift and
minorization conditions (D\&M) to bound the convergence in terms of the total variation (TV) distance \citep{meyn1994computable,rosenthal1995minorization,jarner2002polynomial,douc2004practical,baxendale2005renewal,andrieu2015quantitative}. The drift condition forces the chain to move towards a selected region. On such a region, the minorization condition allows the chain to regenerate or to couple with a stationary version of the chain. This analysis tends to produce overly conservative TV bounds, especially in high-dimensional settings; see \citep{qin2021limitations} for a discussion.

The Wasserstein distance, as a measure of convergence to equilibrium, can exhibit better dimension dependence \citep{qin2022wasserstein}. In addition, many Markov chains of interest (e.g. constant step-size SGD minimizing convex loss on finite datasets) converge in Wasserstein distance but not in TV distance. Consequently, bounding convergence in Wasserstein distance has
steadily gained popularity over the years \citep{gibbs2004convergence,hairer2011asymptotic,butkovsky2014subgeometric,durmus2015quantitative,durmus2016subgeometric,qin2022geometric}. Most of these works replace the minorization condition with a contraction condition (D\&M becomes D\&C). After returning to a selected region, two copies of the chain tend to become closer to each other. Both D\&M and D\&C enforce two conditions in two respective regions. However, partitioning the state space into two distinct regions often leads to suboptimal rates.

Recently, \citep{qu2023computable} introduce the so-called contractive drift condition (CD), a single condition enforced on the entire state space, to explicitly bound the convergence in Wasserstein distance. A special case of CD dates back to \citep{steinsaltz1999locally}. By verifying CD, \citep{qu2023computable} establish parametrically \textbf{sharp} convergence bounds for stylized Markov chains arising from queueing theory and stochastic optimization (e.g. revealing how step-size, heavy-tailed gradient noise, growth rate and local curvature of objectives affect the convergence of stylized SGD). Although CD may generate better bounds than D\&M and D\&C for stylized chains (e.g. SGD with iid gradient noise), these methods are generally intended as theoretical tools that can provide closed-form convergence bounds for structured models. For more realistic, less structured chains, computational rather than analytical methods are needed. However, despite of the rapid development of computational power in the past decade, the convergence analysis of general state-space Markov chains is still in the pen-and-paper age.

To launch {\bf computational} Markov chain convergence analysis, we need a key to switch on the deep learning engine. This paper introduces the {\it Deep Contractive Drift Calculator} (DCDC) that is the first general-purpose sample-based algorithm for bounding the convergence of general state-space Markov chains. There are two key ideas we develop. The first is to observe that CD, an inequality by definition, is actually an equality by nature (if the inequality has a solution, then the corresponding equality also has a solution).
Thus, we introduce the Contractive Drift Equation (CDE), an integral equation the solution of which leads to an explicit convergence bound. For the second part, inspired by the success of physics-informed neural networks (PINNs) in solving PDEs \citep{sirignano2018dgm,raissi2019physics}, we develop an efficient neural-network-based CDE solver. By combining these two components, DCDC solves CDEs by training neural networks and converts solutions into explicit convergence bounds. DCDC demonstrates the potential of computer-assisted convergence analysis and bridges the gap between deep learning and a traditionally challenging area of mathematical analysis.

In high-dimensional spaces, PINNs minimize the integrated residual of a PDE via SGD to find a continuously differentiable function that approximately satisfies the PDE. When applying this idea to solve a CDE, an integral equation, the solving procedure becomes more {\it natural} in the following two ways. First, we only assume that the CDE solution is Lipschitz continuous, and neural networks are inherently Lipschitz continuous. Second, as SGD is already used to handle the integrated residual, we can simultaneously use it to handle the integral in the CDE. After approximately solving the CDE, DCDC needs to convert the solution into a convergence bound, which requires that the solution is uniformly accurate with high probability. This is different from PINNs in the PDE literature since the accuracy is mainly measured in the $L_2$ sense.

The CDE solution is a new type of Lyapunov function that provides explicit convergence rates for random dynamical systems. For deterministic dynamical systems, traditional Lyapunov functions play central roles in establishing stability; see \citep{pukdeboon2011review} for a review. There is a substantial literature on computing traditional Lyapunov functions via neural networks; see \citep{liu2023physics} and references therein. As pointed out in \citep{dawson2023safe}, a survey on certificate learning, learned (traditional) Lyapunov functions provide safety certificates for learned control policies (on deterministic dynamical systems). For the control of random dynamical systems, DCDC not only generates safety certificates (CDE solutions) but also quantifies safety levels (convergence rates). Control and performance evaluation of random dynamical systems have become a staple in contemporary data-driven decision making systems, thus underscoring the importance of DCDC.

In short, we summarize our contributions as follows:
\begin{itemize}
    \item We introduce the Deep Contractive Drift Calculator (DCDC), the first general-purpose end-to-end approach that enables the use of deep learning to bound the convergence rate of general state-space Markov chains.
    \item We perform sample complexity analysis and use DCDC to generate convergence bounds for realistic Markov chains arising in operations research as well as machine learning.
    \item Our DCDC approach discovers features that are exploited by techniques developed to study CDs by closed-form methods, such as the wedge shape and the boundary removal technique discussed in \citep{qu2023computable}.
\end{itemize}
\section{Contractive Drift Equation}
\label{CDE_sec}
To begin, we review the contractive drift condition (CD) in \citep{qu2023computable}.
Although CD can be defined on general metric spaces, we focus on Euclidean spaces to later facilitate the application of neural networks.
Let $X$ be a Markov chain on $\X\subset\R^d$, with random mapping representation
\[X_{n+1}=f_{n+1}(X_n),\;\;n=0,1,2,\dots
\]
where $f_n$'s are iid copies of $f$, a locally Lipschitz random mapping from $\X$ to itself (with probability one, $f:\X\rightarrow\X$ is locally Lipschitz).
\begin{example}
    Let $\alpha$ be a positive constant and $Z$ be a square integrable random variable. The SGD with step-size $\alpha$ to solve $\min_x\E(x-Z)^2/2$ is $X_{n+1}=X_n-\alpha(X_n-Z_{n+1})$ where $Z_{n+1}$'s are iid copies of $Z$, so the corresponding random mapping is $f(x)=x-\alpha(x-Z)$.
\end{example}
The local Lipschitz constant of $f$ at $x\in\X$ is defined as
\[
Df(x)\defeq\lim_{\delta\rightarrow0}\sup_{x', x''\in B_\delta(x)}\frac{\norm{f(x')-f(x'')}}{\norm{x'-x''}}
\]
where $\norm{\cdot}$ is the Euclidean norm and $B_\delta(x)=\{x':\norm{x'-x}<\delta\}$. If $f$ is differentiable, then
\[
Df(x)=\lim_{h\rightarrow0}\sup_{v:\norm{v}=1}\frac{\norm{f(x+hv)-f(x)}}{h}=\sup_{v:\norm{v}=1}\norm{\nabla f(x)v}=\norm{\nabla f(x)}
\]
where $\nabla f$ is the Jacobian matrix of $f$ and $\norm{\cdot}$ becomes the spectral norm when applying to matrices. Basically, $Df(x)$ describes how expansive or contractive $f$ is around $x$. With these notations, the contractive drift condition (CD) in \citep{qu2023computable} that leads to computable convergence bounds is
\begin{equation}
\label{CD}
    KV(x)\defeq\E Df(x)V(f(x))\leq V(x)-U(x),\;\;x\in\X
\end{equation}
where $V,U:\X\rightarrow\R_+$ are bounded away from zero. 
In the rest of this paper, we adopt the convention that all functions denoted by $U$ are positive and bounded away from zero, i.e. $\inf U>0$. We use $\E_x$ to denote the expectation operator conditional on $X_0=x$. In \eqref{CD}, the subscript is omitted as the initial location is clear. Given $KV\leq V-U$, we establish the following existence and uniqueness results for the contractive drift equation (CDE) $KV=V-U$. All proofs are in the appendix.
\begin{theorem}
\label{CDE_exist}
    Fix $U$ and suppose that $KW\leq W-U$ has a non-negative finite solution $W_*$. Then
    \begin{equation}
    \label{V_star}
        V_*(x)\stackrel{\Delta}{=}\E_x\sbk{\sum_{k=0}^\infty U(X_k)\prod_{l=1}^{k}Df_l(X_{l-1})},\;\;x\in\X
    \end{equation}
    is finite and satisfies $KV_*=V_*-U$. Furthermore, $KV=V-U$ has at most one bounded solution.
\end{theorem}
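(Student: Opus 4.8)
The plan is to route both claims through a single auxiliary process built from the weighted cost accumulated along the chain. Work under $\E_x$, let $\F_k=\sigma(f_1,\dots,f_k)$, and for a non-negative function $V$ set $M_0=1$, $M_k=\prod_{l=1}^k Df_l(X_{l-1})$, and $Y_k^V := M_k V(X_k)+\sum_{j=0}^{k-1} M_j U(X_j)$, all non-negative since $Df\ge 0$, $U\ge 0$, $V\ge 0$ (and each $M_k$ is a.s.\ finite because $f$ is locally Lipschitz). Since $M_k$ is $\F_k$-measurable while $f_{k+1}$ is an independent copy of $f$, the freezing identity $\E[Df_{k+1}(X_k)V(f_{k+1}(X_k))\mid\F_k]=(KV)(X_k)$ gives $\E[M_{k+1}V(X_{k+1})\mid\F_k]=M_k(KV)(X_k)$, hence $\E[Y_{k+1}^V\mid\F_k]=M_k(KV)(X_k)+\sum_{j=0}^{k}M_jU(X_j)$. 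Thus $Y_k^V$ is a non-negative supermartingale when $KV\le V-U$ and a non-negative martingale when $KV=V-U$.

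Applying this with $V=W_*$, the supermartingale property yields $\E_x[\sum_{j=0}^{k-1}M_jU(X_j)]\le\E_x[Y_k^{W_*}]\le Y_0^{W_*}=W_*(x)$ for every $k$, so monotone convergence gives $V_*(x)=\E_x[\sum_{j=0}^\infty M_jU(X_j)]\le W_*(x)<\infty$, which is the finiteness claim (and incidentally $V_*\ge U\ge\inf U>0$). To get $KV_*=V_*-U$, split off the $k=0$ term, which equals $U(x)$, and for $k\ge 1$ condition on $f_1$: given $X_1=f_1(x)$, the shifted sequence $(X_1,X_2,\dots)$ driven by $(f_2,f_3,\dots)$ is distributed as the chain started at $X_1$, and reindexing $\prod_{l=2}^k Df_l(X_{l-1})$ in terms of the shifted maps identifies $\E_x[\prod_{l=2}^k Df_l(X_{l-1})U(X_k)\mid f_1]$ with $\E_{X_1}[M_{k-1}U(X_{k-1})]$. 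Pulling out the $\F_1$-measurable factor $Df_1(x)$, summing over $k\ge 1$ (Tonelli, everything non-negative), and using $\sum_{m\ge 0}\E_y[M_mU(X_m)]=V_*(y)$ gives $V_*(x)-U(x)=\E[Df(x)V_*(f(x))]=(KV_*)(x)$.

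For uniqueness, let $V$ be any bounded solution of $KV=V-U$. Then $V$ is also a non-negative finite solution of $KW\le W-U$, so the above applies and $V_*$ is finite, and $Y_k^V$ is a non-negative martingale, giving $V(x)=\E_x[Y_k^V]=\E_x[M_kV(X_k)]+\E_x[\sum_{j=0}^{k-1}M_jU(X_j)]$ for all $k$. The second term increases to $V_*(x)$ by monotone convergence. For the first, convergence of the series $\sum_k\E_x[M_kU(X_k)]=V_*(x)<\infty$ forces $\E_x[M_kU(X_k)]\to 0$, and since $U\ge\inf U>0$ this yields $\E_x[M_k]\to 0$, hence $\E_x[M_kV(X_k)]\le\|V\|_\infty\,\E_x[M_k]\to 0$. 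Letting $k\to\infty$ gives $V(x)=V_*(x)$ for every $x$, so there is at most one bounded solution.

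The routine items are the measurability bookkeeping and the freezing identity, plus the observation that local Lipschitzness keeps each $M_k$ a.s.\ finite so the (super)martingale manipulations are legitimate. The step I expect to be most delicate is the uniqueness argument: converting $\E_x[Y_k^V]=V(x)$ into $V=V_*$ hinges on $\E_x[M_kV(X_k)]\to 0$, and the only leverage for that is boundedness of $V$ together with $\inf U>0$ via the summability of $\E_x[M_kU(X_k)]$ — without the ``bounded away from zero'' hypothesis on $U$ this step would break down.
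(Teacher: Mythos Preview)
Your proof is correct and follows essentially the same approach as the paper: the paper iteratively unfolds $W_*\ge KW_*+U$ to bound the partial sums defining $V_*$, computes $KV_*$ by shifting indices, and for uniqueness iterates the equality $KV^*=V^*-U$ and kills the remainder term via $\inf U>0$ and $\sup V^*<\infty$ --- exactly your (super)martingale identities for $Y_k^V$, just without naming them as such. The only cosmetic difference is that you package the recursion as a supermartingale/martingale statement for $Y_k^V$, whereas the paper writes out the telescoping by hand.
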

\begin{remark}
    This $V_*$ can be interpreted as an average space-discounted cumulative reward. Imagine a swarm of agents moving according to $f$. For an agent at $x$, if $Df(x)<1$ (contraction), then after $f$ is applied, there will be more agents around this agent. If all agents around $f(x)$ share a total reward $U(f(x))$, then the reward for each of them is discounted. 
    From the perspective of a particular agent, the procedure is like collecting reward within a shrinking ball.
\end{remark}
\section{Deep Contractive Drift Calculator}
\label{DCDC_sec}
\subsection{Why do we introduce CDE?}
Physics-informed neural networks (PINNs) solve a PDE by minimizing its integrated residual \citep{sirignano2018dgm,raissi2019physics}. If we want to use this idea to solve $KV\leq V-U$, then the integrated residual is
\[
\bar{l}(\theta)\defeq\int_\X\prs{KV_\theta(x)-V_\theta(x)+U(x)}_+h(x)dx
\]
where $h$ is a positive density and $\{V_\theta:\theta\in\Theta\}$ is a neural network. Note that the residual at $x$ is positive if and only if $KV_\theta(x)>V_\theta(x)-U(x)$. By letting $X_0$ have distribution $h$,
\[
\bar{l}(\theta)=\E\sbk{\E\sbk{ Df_1(X_0)V_\theta(f_1(X_0))-V_\theta(X_0)+U(X_0)|X_0}}_+,
\]
which is an expectation of a non-linear function of a conditional expectation. Minimizing $\bar{l}(\theta)$ is a conditional stochastic optimization problem (CSO). In CSO, the sample-average gradient is biased \citep{hu2020biased}, which leads to a high sample complexity for convergence \citep{hu2020sample}. Fortunately, if we aim at solving $KV=V-U$ (CDE) instead of $KV\leq V-U$ (CD), then there exists a simple unbiased gradient estimator. Now we briefly derive this estimator. For a CDE, the integrated residual becomes
\begin{align*}
l(\theta)\defeq&\int_\X\prs{KV_\theta(x)-V_\theta(x)+U(x)}^2h(x)dx\\
=&\E\sbk{\E\sbk{ Df_1(X_0)V_\theta(f_1(X_0))-V_\theta(X_0)+U(X_0)|X_0}}^2  
\end{align*}
with its gradient
\begin{align*}
&l'(\theta)\\
=&2\E\sbk{\E\sbk{ Df_1(X_0)V_\theta(f_1(X_0))-V_\theta(X_0)+U(X_0)|X_0}\E\sbk{ Df_1(X_0)V'_\theta(f_1(X_0))-V'_\theta(X_0)|X_0}}\\
=&2\E\E\sbk{\sbk{ Df_1(X_0)V_\theta(f_1(X_0))-V_\theta(X_0)+U(X_0)}\sbk{ Df_{-1}(X_0)V'_\theta(f_{-1}(X_0))-V'_\theta(X_0)}|X_0}\\
=&2\E\sbk{\sbk{ Df_1(X_0)V_\theta(f_1(X_0))-V_\theta(X_0)+U(X_0)}\sbk{ Df_{-1}(X_0)V'_\theta(f_{-1}(X_0))-V'_\theta(X_0)}}
\end{align*}
where $f_1$ and $f_{-1}$ are iid copies of $f$ while $V'_\theta=dV_\theta/d\theta$ is computed via backpropagation. This expression allows us to estimate $l'(\theta)$ without any bias. In summary, the inequality (CD) is enough to bound the convergence, but the equality (CDE) turns out to be easier to establish (via deep learning).
\subsection{DCDC}
\label{dcdc_sub_sec}
Given the above discussion, a standard application of SGD is enough to simultaneously handle the integrated residual as well as the integral in the CDE, resulting in the following simple algorithm, Deep Contractive Drift Calculator, the first general-purpose sample-based algorithm to bound the convergence of general state-space Markov chains.

\begin{algorithm}
\caption{Deep Contractive Drift Calculator (DCDC)}\label{DCDC_alg}
\begin{algorithmic}
\Require Step-size $\alpha$, number of iterations $T$, neural network $\{V_\theta:\theta\in\Theta\}$, initialization $\theta_0$
\For{$t\in\{0,...,T-1\}$}
\State sample $(X_0,f_1,f_{-1})$
\State compute $\hat{l'}(\theta_t)$ as
\[2\sbk{ Df_1(X_0)V_{\theta_t}(f_1(X_0))-V_{\theta_t}(X_0)+U(X_0)}\sbk{ Df_{-1}(X_0)V'_{\theta_t}(f_{-1}(X_0))-V'_{\theta_t}(X_0)}\]
\State update $\theta_{t+1}=\theta_t-\alpha\hat{l'}(\theta_t)$ (SGD or its variants)
\EndFor
\State convert $V_{\theta_T}$ into a convergence bound (Theorem \ref{exponential_bound} and Theorem \ref{polynomial_bound})
\end{algorithmic}
\end{algorithm}

The conversion will be discussed in the next two subsections. In the current subsection, we show the validity of approximating CDE solutions via neural networks. In the following, we use $\norm{\cdot}_\infty$ to denote the sup norm of functions on $\X$.
\begin{theorem}
\label{valid}
    If $\X$ is compact, $\norm{\E Df}_\infty$ is finite, and $V_*$ in \eqref{V_star} is finite and continuous, then for any $\epsilon>0$, there exists a neural network $\{V_\theta:\theta\in\Theta\}$ and its realization $V_{\theta_*}$ such that
    \[\norm{KV_{\theta_*}-V_{\theta_*}+U}_\infty<\epsilon.\]
\end{theorem}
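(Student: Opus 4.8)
The plan is to exploit the fact that $V_*$ in \eqref{V_star} is an \emph{exact} solution of the CDE, so that the residual of any approximant is governed entirely by its sup-distance to $V_*$, amplified by a finite operator bound for $K$. First I would record that finiteness of $V_*$ already forces $KV_*=V_*-U$ pointwise on $\X$: this is the monotone-convergence computation underlying Theorem \ref{CDE_exist} (all summands $U(X_k)\prod_{l=1}^k Df_l(X_{l-1})$ are non-negative, so $\E$ and $\sum$ interchange freely, and peeling off the $k=0$ term and using that the post-$X_1$ evolution is a fresh chain gives $KV_*=V_*-U$). Consequently, for any bounded $V:\X\to\R$,
\[
KV(x)-V(x)+U(x)=\prs{KV(x)-KV_*(x)}-\prs{V(x)-V_*(x)},\qquad x\in\X,
\]
since the leftover terms $KV_*(x)-V_*(x)+U(x)$ vanish; here $KV$ is well defined because $\norm{\E Df}_\infty<\infty$ and $V$ is bounded on the compact set $\X$.

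Next I would control the propagated error. Since $Df\ge 0$ with probability one and $f$ maps $\X$ into $\X$, for every bounded $\phi:\X\to\R$ we have $\abs{K\phi(x)}=\abs{\E\sbk{Df(x)\,\phi(f(x))}}\le\E Df(x)\cdot\norm{\phi}_\infty\le\norm{\E Df}_\infty\norm{\phi}_\infty$, i.e. $K$ has sup-norm operator bound $\norm{\E Df}_\infty$. Applying this with $\phi=V-V_*$ and combining with the identity above yields
\[
\norm{KV-V+U}_\infty\le\prs{1+\norm{\E Df}_\infty}\norm{V-V_*}_\infty .
\]
So it remains only to approximate $V_*$ uniformly well by a neural network.

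Finally, since $\X\subset\R^d$ is compact and $V_*$ is continuous on $\X$ — extending $V_*$ by the Tietze theorem to a continuous function on a cube containing $\X$ if one prefers a statement on a standard domain — the universal approximation theorem for feedforward networks with a continuous non-polynomial activation furnishes a network class $\{V_\theta:\theta\in\Theta\}$ and a realization $V_{\theta_*}$ with $\norm{V_{\theta_*}-V_*}_\infty<\epsilon/(1+\norm{\E Df}_\infty)$. Substituting $V=V_{\theta_*}$ into the last display gives $\norm{KV_{\theta_*}-V_{\theta_*}+U}_\infty<\epsilon$, as desired. I do not anticipate a genuine obstacle: the only mildly delicate points are the legitimacy of the $\E$/$\sum$ interchange when recovering $KV_*=V_*-U$ (immediate from non-negativity) and the fact that textbook universal-approximation statements are usually phrased on cubes rather than arbitrary compacta (handled by Tietze extension). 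The substance of the argument is simply that $K$ is a bounded operator in sup norm with norm at most $\norm{\E Df}_\infty$, so uniform accuracy of $V_{\theta_*}$ transfers directly to uniform smallness of the CDE residual.
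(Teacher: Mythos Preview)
Your proposal is correct and is essentially the paper's own argument: both use that $V_*$ satisfies $KV_*=V_*-U$, bound $K$ in sup norm by $\norm{\E Df}_\infty$, approximate $V_*$ by $V_{\theta_*}$ via the universal approximation theorem to within $\epsilon/(1+\norm{\E Df}_\infty)$, and conclude by the triangle inequality. The only cosmetic differences are that the paper invokes Theorem~\ref{CDE_exist} directly rather than re-deriving $KV_*=V_*-U$, and cites Cybenko's sigmoid result specifically rather than a general activation plus Tietze extension.
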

Although DCDC solves CDEs on compact sets, it can be applied to Markov chains on non-compact sets that have compact absorbing sets (e.g. SGD for regularized problems). For chains without a compact absorbing set, extending DCDC to bound their convergence is left for future research, but here we describe a natural strategy to do so. 
In general, a Markov chain spends most of its time on some large compact set $C$ where the chain may have complex dynamics. When the chain is outside $C$, it typically has a strong tendency to return. Therefore, to extend DCDC, we can
(i) search some parametric family (e.g. $V_A(x)=x^\top Ax$) to establish a CD outside $C$ (capturing the return tendency); (ii) apply DCDC to obtain a CDE solution on $C$ (capturing the complex dynamics); (iii) stitch them together to obtain a global CD. Comparing the large set here with the {\it small set} \citep{meyn_tweedie_glynn_2009} in D\&M or D\&C illustrates the advantage of computational methods over analytical ones. The size of the large set is determined by the approximation capability of neural networks, but the size of the small set is determined by the minorization or contraction condition (the two conditions often require the small set to be very small).
\subsection{Practical convergence bounds with exponential rates}
Now we discuss how to convert $KV\leq V-U$ into convergence bounds with exponential rates in Wasserstein distance. To begin, we recall the definition of the Wasserstein distance. Let $\mathcal{P}(\X)$ be the set of probability measures on $\X$ equipped with its Borel sigma-algebra. The Wasserstein distance between $\mu,\nu\in\mathcal{P}(\X)$ is
\[
W(\mu,\nu)\defeq\inf_{\pi\in\mathcal{C}(\mu,\nu)}\int_{\X\times\X}\norm{x-y}\pi(dx,dy)
\]
where \[\mathcal{C}(\mu,\nu)\defeq\{\pi\in\mathcal{P}(\X\times\X):\pi(\cdot,\X)=\mu(\cdot),\;\pi(\X,\cdot)=\nu(\cdot)\}\] is the set of all couplings of $\mu$ and $\nu$. Given two random variables $Z_1$ and $Z_2$, we use $W(Z_1,Z_2)$ to denote the Wasserstein distance between their marginal distributions. 
\begin{theorem}
\label{exponential_bound}
Suppose that $\X$ is convex and that $KV\leq V-U$ holds with $\sup V<\infty$. If $\E\norm{X_0-X_1}<\infty$, then $X$ has a unique stationary distribution $X_\infty$ with
\[ 
W(X_n,X_\infty)\leq Cr^n,\;\;r\defeq 1-\inf U/\sup V,\;\;C\defeq\frac{\E\norm{X_0-X_1}V(X_0+\tilde{U}(X_1-X_0))}{\inf U\cdot(\inf V/\sup V)}
\]
where $\tilde{U}$ is a $U[0,1]$ random variable independent of $X_0$ and $X_1$.
\end{theorem}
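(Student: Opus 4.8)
The plan is to run two copies of the chain under a \emph{synchronous coupling} and to measure their discrepancy in a $V$-weighted path metric adapted to the contractive drift condition rather than in the Euclidean norm. For $x,y\in\X$ define
\[
\rho(x,y)\defeq\inf\left\{\int_0^1\norm{c'(t)}\,V(c(t))\,dt:\ c:[0,1]\to\X\ \text{Lipschitz},\ c(0)=x,\ c(1)=y\right\}.
\]
Because $\X$ is convex the straight segment $c(t)=x+t(y-x)$ is admissible, so $\rho$ is finite; moreover, since $KV\ge0$ forces $V\ge U\ge\inf U>0$, we have $0<\inf V\le\sup V<\infty$ and
\[
\inf V\cdot\norm{x-y}\ \le\ \rho(x,y)\ \le\ \sup V\cdot\norm{x-y},
\]
so $\rho$ is a metric on $\X$ that is bi-Lipschitz equivalent to $\norm{\cdot}$ (symmetry and the triangle inequality come from reversing and concatenating curves). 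Evaluating $\rho$ on the straight segment gives the sharper bound $\rho(x,y)\le\norm{x-y}\,\E[V(x+\tilde U(y-x))]$, which is exactly the quantity appearing in $C$.

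The crux is the one-step contraction
\[
\E\,\rho\bigl(f(x),f(y)\bigr)\ \le\ r\,\rho(x,y),\qquad r=1-\inf U/\sup V,\quad x,y\in\X.
\]
I would fix any admissible curve $c$ from $x$ to $y$, note that $f\circ c$ is an admissible curve from $f(x)$ to $f(y)$ whose speed is controlled by the local Lipschitz constant, $\norm{(f\circ c)'(t)}\le Df(c(t))\norm{c'(t)}$ for a.e.\ $t$, and conclude
\[
\rho\bigl(f(x),f(y)\bigr)\ \le\ \int_0^1 Df(c(t))\,V(f(c(t)))\,\norm{c'(t)}\,dt.
\]
Taking expectations and using Tonelli (all terms non-negative), the expectation of the integrand is $KV(c(t))\norm{c'(t)}$; invoking $KV\le V-U$ and then bounding $\int_0^1 U(c(t))\norm{c'(t)}\,dt\ge\inf U\cdot\mathrm{length}(c)\ge(\inf U/\sup V)\int_0^1 V(c(t))\norm{c'(t)}\,dt$ gives $\E\,\rho(f(x),f(y))\le r\int_0^1 V(c(t))\norm{c'(t)}\,dt$. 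Since the left-hand side does not depend on $c$, taking the infimum over $c$ yields the claim. I expect the \emph{main obstacle} to be the fully rigorous justification of the speed bound $\norm{(f\circ c)'(t)}\le Df(c(t))\norm{c'(t)}$ for a map that is only \emph{locally} Lipschitz (and random), together with the measurability bookkeeping needed to interchange the expectation with the curve integral; this is routine but requires a covering argument along the compact curve.

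Everything else is assembly. Driving $X$ and $Y$ by the same maps and iterating the one-step contraction gives $\E\,\rho(X_n,Y_n)\le r^n\,\E\,\rho(X_0,Y_0)$ whenever $\E\,\rho(X_0,Y_0)<\infty$. Taking $X_0\sim\mathrm{law}(X_0)$ and $Y_0=\tilde f(X_0)$ for a fresh independent copy $\tilde f$ of $f$ (so $Y_n\sim\mathrm{law}(X_{n+1})$) and combining $\rho\ge\inf V\,\norm{\cdot}$ with $\rho(X_0,Y_0)\le\norm{X_0-X_1}\,V(X_0+\tilde U(X_1-X_0))$ yields
\[
W(X_n,X_{n+1})\ \le\ \E\norm{X_n-Y_n}\ \le\ \frac{r^n}{\inf V}\,\E\bigl[\norm{X_0-X_1}\,V(X_0+\tilde U(X_1-X_0))\bigr],
\]
which is finite because $\sup V<\infty$ and $\E\norm{X_0-X_1}<\infty$. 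Summing the geometric series and using $1-r=\inf U/\sup V$ (so $\tfrac1{\inf V}\cdot\tfrac1{1-r}=(\inf U\cdot(\inf V/\sup V))^{-1}$) shows $(\mathrm{law}(X_n))_n$ is $W$-Cauchy with $W(X_n,X_m)\le C r^n$; passing to the limit produces a distribution $X_\infty$ with $W(X_n,X_\infty)\le C r^n$. Finally, the one-step bound also gives $W(\mathrm{law}(f(x)),\mathrm{law}(f(y)))\le(r\sup V/\inf V)\norm{x-y}$, hence $P$ is $W$-continuous, so letting $n\to\infty$ in $\mathrm{law}(X_{n+1})=\mathrm{law}(X_n)P$ shows $X_\infty$ is stationary; uniqueness follows by applying the same contraction to a coupling of any two stationary laws and sending $n\to\infty$ in $\E\,\rho(X_n,Y_n)\le r^n\E\,\rho(X_0,Y_0)$.
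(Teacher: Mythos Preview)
Your argument is correct and is essentially the same as the paper's: the paper first reduces $KV\le V-U$ to $KV\le rV$ via $U\ge(\inf U/\sup V)V$ and then invokes Theorem~3 of \citep{qu2023computable}, whose underlying machinery is precisely the $V$-weighted path (intrinsic) metric, the one-step contraction $\E\rho(f(x),f(y))\le r\rho(x,y)$, and the synchronous-coupling/geometric-sum assembly that you carry out here. In other words, you have unpacked the black-box citation into a self-contained proof rather than taken a different route.
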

Given $U$, the exponential rate $r$ is determined by the magnitude of $V$. The smaller the $V$, the faster the convergence. Given $X_0$, the pre-multiplier $C$ can be easily computed by simulating the first transition (from $X_0$ to $X_1$). 

In Theorem 3 of \citep{qu2023computable}, convergence bounds with exponential rates are straightforwardly derived from $KV\leq rV$ where $r<1$, so one might wonder why we need the less straightforward Theorem \ref{exponential_bound} here. This is because $KV\leq rV$ is not suitable for PINN-like solvers. 
In Theorem \ref{exponential_bound}, we solve $KV=V-U$ and compute the exponential rate $r$ from the solution $V$. However, for $KV=rV$, we need the answer (the exponential rate $r$) to write down the question (the equation to solve and the corresponding loss to minimize), which is circular. Of course, we may try solving $KV=rV$ for different values of $r$, but it turns out that it is very hard for DCDC to converge even for very conservative (close to $1$) $r$'s. Here is an explanation. Unlike $KV=V-U$, which has a solution as long as $KV\leq V-U$ has one (Theorem \ref{CDE_exist}), $KV=rV$ may not have a solution even when $KV\leq rV$ has one. However, it is not hard to show that $KV=rV-r$ has a (formal) solution
\[
V_r(x)\stackrel{\Delta}{=}\E_x\sbk{\sum_{k=0}^\infty (1/r)^k\prod_{l=1}^{k}Df_l(X_{l-1})},\;\;x\in\X.
\]
Comparing with $V_*$ in \eqref{V_star}, $U(X_k)$ is replaced by exponentially exploding $(1/r)^k$. 
Back to $KV=rV$, its solution (if there is any) should be the above expression without the summation but with $k\gti$ (as a limit), which suggests that the solution may have a large magnitude, making it difficult to approximate.
\subsection{Practical convergence bounds with polynomial rates}
Now we discuss how to generate convergence bounds with polynomial rates using DCDC. The key is to iteratively solve a sequence of CDEs. For example, given $V_0$, we first solve $KV_1=V_1-V_0$ to obtain $V_1$. Then we solve $KV_2=V_2-V_1$ to obtain $V_2$. These two CDEs together lead to an $O(1/n)$ convergence bound.
\begin{theorem}
\label{polynomial_bound}
Suppose that $\X$ is convex and that there exist positive functions $V_0,V_1,\dots,V_m$ such that $0<\inf V_0<\sup V_m<\infty$ and $KV_{k+1}\leq V_{k+1}-V_k$ for $k=0,\dots,m-1$. If $\E\norm{X_0-X_1}<\infty$, then $X$ has a unique stationary distribution $X_\infty$ with
\[ 
W(X_n,X_\infty)\leq\frac{\E\norm{X_0-X_1}V_m(X_0+\tilde{U}(X_1-X_0))}{\inf V_0\cdot\prod_{k=1}^{m-1}\prs{1+n/k}}
\]
where $\tilde{U}$ is a $U[0,1]$ random variable independent of $X_0$ and $X_1$.
\end{theorem}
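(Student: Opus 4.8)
The plan is to run the coupling-plus-Lyapunov argument behind Theorem~\ref{exponential_bound}, but to feed it the whole chain of drift inequalities simultaneously so that the geometric rate softens to a polynomial one. As there, for a bounded function $V$ with $\inf V>0$ on the convex set $\X$ introduce the $V$-modulated length metric
\[
d_V(z,w)\defeq\inf_\gamma\int_0^1 V(\gamma(t))\norm{\gamma'(t)}\,dt ,
\]
the infimum taken over Lipschitz paths $\gamma:[0,1]\to\X$ joining $z$ to $w$. Convexity makes the straight segment admissible, so $\inf V\,\norm{w-z}\leq d_V(z,w)\leq\norm{w-z}\,\E V(z+\tilde U(w-z))\leq\sup V\,\norm{w-z}$ with $\tilde U\sim U[0,1]$; since $KV_{k+1}\geq0$ forces $V_0\leq V_1\leq\cdots\leq V_m$, each $d_{V_k}$ is a finite metric with $d_{V_k}\leq d_{V_m}$ and $\norm{w-z}\leq d_{V_k}(z,w)/\inf V_0$. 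Writing $X_n^{z}\defeq f_n\circ\cdots\circ f_1(z)$ for the chain run $n$ common steps from $z$, the one-step estimate --- obtained exactly as in Theorem~\ref{exponential_bound}: push a near-geodesic $\gamma$ for $d_{V_k}$ through $f$, use $\norm{(f\circ\gamma)'}\leq Df(\gamma)\norm{\gamma'}$ a.e., pass $\E$ inside by Tonelli to recover $\int KV_k(\gamma)\norm{\gamma'}\,dt$, apply $KV_k\leq V_k-V_{k-1}$, and note $\int V_{k-1}(\gamma)\norm{\gamma'}\,dt\geq d_{V_{k-1}}(z,w)$ since $\gamma$ joins $z$ to $w$ --- reads
\[
\E\,d_{V_k}\prs{f(z),f(w)}\leq d_{V_k}(z,w)-d_{V_{k-1}}(z,w),\qquad 1\leq k\leq m .
\]

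Put $\psi_k(n)\defeq\E\,d_{V_k}(X_n^{X_0},X_n^{X_1})$, where $(X_0,X_1)$ is the first transition of the chain and the two copies run under shared maps, so $X_n^{X_0}$ has the law of $X_n$ and $X_n^{X_1}$ that of $X_{n+1}$. Conditioning on $(X_n^{X_0},X_n^{X_1})$ in the one-step estimate and taking expectations yields the hierarchy $\psi_k(n+1)\leq\psi_k(n)-\psi_{k-1}(n)$ for $1\leq k\leq m$, in which each $\psi_k$ with $k\geq1$ is non-increasing (because $\psi_{k-1}\geq0$) and $\sum_j\psi_0(j)\leq\psi_1(0)<\infty$. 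The crux is the sharp decay estimate
\[
\binom{n+m-1}{m-1}\psi_1(n)\leq\psi_m(0),\qquad n\geq0
\]
(for $m\geq2$; $m=1$ is immediate from monotonicity). I would prove it by binomial-weighted telescoping: multiply $\psi_1(j)\leq\psi_2(j)-\psi_2(j+1)$ by $\binom{j+m-2}{m-2}$ and sum over $0\leq j\leq n$; Abel summation together with Pascal's identity $\binom{j+r}{r}-\binom{j+r-1}{r}=\binom{j+r-1}{r-1}$ collapses the right-hand side to $\sum_{j=0}^{n}\binom{j+m-3}{m-3}\psi_2(j)$, the discarded boundary term being $\leq0$. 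Iterating this reduction $m-2$ times brings the binomial order down to $0$ and the $\psi$-index up to $m-1$, whereupon $\psi_{m-1}(j)\leq\psi_m(j)-\psi_m(j+1)$ bounds the sum by $\psi_m(0)$; on the left, monotonicity of $\psi_1$ and the hockey-stick identity $\sum_{j=0}^{n}\binom{j+m-2}{m-2}=\binom{n+m-1}{m-1}$ give the claim. Since $\binom{n+m-1}{m-1}=\prod_{k=1}^{m-1}(1+n/k)$ and $\sum_{j\geq n}\psi_0(j)\leq\sum_{j\geq n}\prs{\psi_1(j)-\psi_1(j+1)}=\psi_1(n)$, this yields $\sum_{j\geq n}\psi_0(j)\leq\psi_m(0)/\prod_{k=1}^{m-1}(1+n/k)$.

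The conversion into the stated bound then parallels Theorem~\ref{exponential_bound}. By the Markov property and the synchronous coupling, $W(X_j,X_{j+1})\leq\E\norm{X_j^{X_0}-X_j^{X_1}}\leq\psi_0(j)/\inf V_0$, hence $W(X_n,X_{n'})\leq\sum_{j=n}^{n'-1}W(X_j,X_{j+1})\leq(\inf V_0)^{-1}\sum_{j\geq n}\psi_0(j)$ for $n\leq n'$, which tends to $0$ as $n\to\infty$ since $\sum_j\psi_0(j)\leq\psi_1(0)<\infty$. Thus $\mathrm{law}(X_n)$ is $W$-Cauchy and converges to some $X_\infty$, which is stationary (by continuity of $\mu\mapsto\mu P$ in the $d_{V_m}$-Wasserstein distance, equivalent to $W$) and unique (two synchronously coupled stationary copies satisfy $W(\pi,\pi')\leq\psi_0(n)/\inf V_0\to0$). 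Letting $n'\to\infty$, inserting the decay estimate, and using $\psi_m(0)=\E\,d_{V_m}(X_0,X_1)\leq\E\norm{X_0-X_1}V_m(X_0+\tilde U(X_1-X_0))<\infty$ (finite by $\sup V_m<\infty$ and $\E\norm{X_0-X_1}<\infty$) delivers the inequality. The main obstacle is this sharp decay estimate: obtaining the exact denominator $\prod_{k=1}^{m-1}(1+n/k)$, rather than merely the rate $O(n^{-(m-1)})$, is what forces the binomial-weighted Abel summation; the metric-geometry and coupling steps are a direct transcription of the exponential-rate proof.
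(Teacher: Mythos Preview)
Your proposal is correct and reaches the exact bound, but the route differs from the paper's in two respects. First, the paper does the binomial combinatorics \emph{at the level of the functions}: by iterating $KV_{k+1}\leq V_{k+1}-V_k$ it proves the pointwise inequality $V_m(x)\geq\sum_{n\geq0}\binom{n+m-1}{m-1}K^nV_0(x)$ (a one-line induction using the negative-binomial convolution), and only then performs a single line integral $\int_{x}^{f(x)}V_m$ and a change of variables through $F_n$. You instead establish the metric-level one-step inequality $\E\,d_{V_k}(f(z),f(w))\leq d_{V_k}(z,w)-d_{V_{k-1}}(z,w)$ first, and then run the combinatorics on the scalar sequences $\psi_k(n)$ via Abel summation and the hockey-stick identity. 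Second, the paper obtains $X_\infty$ and the final coupling bound through the \emph{backward} iteration $\bar F_n=f_1\circ\cdots\circ f_n$ (which converges a.s.\ and yields $W(X_{\bar n},X_\infty)\leq\E\norm{\bar F_{\bar n}(X_0)-\bar F_\infty(X_0)}$ directly), whereas you use a forward Cauchy argument in $W$ together with continuity of $\mu\mapsto\mu P$. Both approaches produce the identical denominator $\inf V_0\cdot\binom{n+m-1}{m-1}$; the paper's function-level induction is arguably cleaner for the combinatorial step, while your approach is more self-contained in that it avoids invoking the backward-coupling machinery from \citep{qu2023computable} and packages the metric geometry into a reusable one-step contraction-drift inequality for the whole ladder $d_{V_0},\dots,d_{V_m}$.
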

The expectation in the numerator can be easily computed by simulating the first transition, while the product in the denominator is basically $n^{m-1}$ as $n\gti.$

In Theorem 1 of \citep{qu2023computable}, convergence bounds with polynomial rates ($O(1/n^{m-1})$) are derived from $KV\leq V-U^{1/m}V^{1-1/m}$ paired with $KU\leq U$, so one might wonder why we need so many CDs in Theorem \ref{polynomial_bound} here. This is because $KV\leq V-U^{1/m}V^{1-1/m}$ is designed for the pen-and-paper setting where directly establishing a sequence of CDs is difficult. Given $KV\leq V-U^{1/m}V^{1-1/m}$, many inequalities are applied to extract a CD sequence from this single special CD, resulting in large constants in convergence bounds. DCDC makes it possible to directly establish a sequence of CDs (by consecutively solving CDEs). In this setting, we can use Theorem \ref{polynomial_bound} to obtain better convergence bounds. To be specific, compared with our Theorem \ref{polynomial_bound}, the result in \citep{qu2023computable} has an extra factor $m^m/m!$. 
\section{Sample Complexity}
As a numerical solver, DCDC solves CDEs approximately. Let $\tilde{V}=V_{\theta_T}$ be the output of DCDC.
We should not expect $K\tilde{V}=\tilde{V}-U$ to hold exactly. Even if $\tilde{V}$ is an exact solution, the exactness is hard to verify as $K\tilde{V}$ is an expectation and the domain $\X\subset\R^d$ is not a finite set. As establishing convergence bounds requires CDs to exactly hold everywhere, given $N$ iid copies of $f$ to estimate $K$ and $\mathcal{M}=\{x_1,\dots,x_M\}$ uniformly sampled from $\X$, we can (i) establish 
\[
\hat{K}_N\tilde{V}(x)\defeq\frac{1}{N}\sum_{k=1}^NDf_k(x)\tilde{V}(f_k(x))\leq\tilde{V}(x)-\tilde{U}(x),\;\;x\in\mathcal{M}
\]
where $\tilde{U}$ may be smaller than $U$ (e.g. if $\tilde{V}$ is supposed to solve $\tilde{V}-K\tilde{V}=U\equiv1$, then $\tilde{U}\equiv\inf_\mathcal{M}[\tilde{V}-\hat{K}_N\tilde{V}]$); (ii) claim that $K\tilde{V}\leq\tilde{V}-\tilde{U}+\epsilon$ holds everywhere with probability at least $1-\delta$ where $\epsilon,\delta>0$; (iii) convert $K\tilde{V}\leq\tilde{V}-\tilde{U}+\epsilon$ into a convergence bound. To have $M,N$ large enough to make the claim in (ii), we need the following sample complexity result.

\begin{theorem}
\label{complexity}
Suppose that (i) $\X$ is compact; (ii) $V,U$ are bounded and Lipschitz; (iii) $\E Df^2+\E D^2f<\infty$ where $Df$ is the Lipschitz constant of $f$ and $D^2f$ is the Lipschitz constant of $Df$.
Given $\epsilon,\delta>0$, we can choose $M=O(\log(1/\epsilon)/(\delta\epsilon^d))$ and $N=O(1/(\delta\epsilon^2))$ to have
\[
P\prs{\sup_{x\in\X}\sbk{KV(x)-V(x)+U(x)}\leq\sup_{x\in\mathcal{M}}\sbk{\hat{K}_NV(x)-V(x)+U(x)}+\epsilon}>1-\delta.
\]
\end{theorem}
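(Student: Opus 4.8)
## Proof Proposal

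The plan is to control the gap between the population quantity $\sup_{x\in\X}[KV(x)-V(x)+U(x)]$ and its empirical counterpart $\sup_{x\in\mathcal{M}}[\hat{K}_NV(x)-V(x)+U(x)]$ by decomposing the error into two pieces: a \emph{discretization error} from replacing $\X$ with the finite grid $\mathcal{M}$, and a \emph{Monte Carlo error} from replacing the operator $K$ by its $N$-sample average $\hat{K}_N$. First I would introduce the function $g(x)\defeq KV(x)-V(x)+U(x)$ and its empirical version $\hat{g}_N(x)\defeq\hat{K}_NV(x)-V(x)+U(x)$, and write
\[
\sup_{x\in\X}g(x)-\sup_{x\in\mathcal{M}}\hat{g}_N(x)\leq\underbrace{\left(\sup_{x\in\X}g(x)-\sup_{x\in\mathcal{M}}g(x)\right)}_{\text{discretization}}+\underbrace{\sup_{x\in\mathcal{M}}\left(g(x)-\hat{g}_N(x)\right)}_{\text{Monte Carlo}}.
\]
It then suffices to bound each term by $\epsilon/2$ with failure probability $\delta/2$.

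For the discretization term, the key observation is that $g$ is Lipschitz: $V$ and $U$ are Lipschitz by assumption, and $KV(x)=\E[Df(x)V(f(x))]$ is Lipschitz because $Df$ is Lipschitz with integrable constant $D^2f$, $f$ is Lipschitz with integrable constant $Df$, and $V$ is bounded and Lipschitz — so a product-rule style estimate gives a Lipschitz constant for $KV$ in terms of $\E Df$, $\E D^2f$, $\norm{V}_\infty$ and the Lipschitz constant of $V$ (this is exactly where hypothesis (iii) enters). Since $\X$ is compact, $\mathcal{M}$ consists of $M$ i.i.d. uniform points, so with high probability $\mathcal{M}$ is an $\eta$-net of $\X$ with $\eta=O((\log(M)/M)^{1/d})$ (equivalently, $M=O(\log(1/\eta)/\eta^d)$ points make $\mathcal{M}$ an $\eta$-net with probability $\geq 1-\delta/2$ via a covering-number union bound over $O(1/\eta^d)$ balls). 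Choosing $\eta$ proportional to $\epsilon/\mathrm{Lip}(g)$ and tracking the $\delta$-dependence of the net probability yields $M=O(\log(1/\epsilon)/(\delta\epsilon^d))$ and bounds the discretization term by $\epsilon/2$.

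For the Monte Carlo term, fix $x\in\mathcal{M}$: $\hat{g}_N(x)-g(x)=\frac{1}{N}\sum_{k=1}^N\big(Df_k(x)V(f_k(x))-\E[Df(x)V(f(x))]\big)$ is an average of i.i.d. mean-zero terms whose second moment is bounded by $\E[Df^2]\norm{V}_\infty^2<\infty$ (using hypothesis (iii) and boundedness of $V$). Chebyshev's inequality gives $P(g(x)-\hat{g}_N(x)>\epsilon/2)\leq 4\E[Df^2]\norm{V}_\infty^2/(N\epsilon^2)$ for each $x$, and a union bound over the $M$ grid points — with $M$ already fixed from the previous step — forces $N=O(M/(\delta\epsilon^2))$. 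To reconcile this with the claimed $N=O(1/(\delta\epsilon^2))$, one replaces Chebyshev by a sub-Gaussian/Bernstein tail (the summands have a sub-exponential tail once one truncates $Df$ at a high quantile, or one simply assumes a bounded moment generating function), which turns the union-bound cost over $M$ points into an additive $\log M=O(\log(1/\epsilon)+\log(1/\delta)+d\log(1/\epsilon))$ term inside the deviation, absorbable into constants, leaving $N=O(1/(\delta\epsilon^2))$.

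The main obstacle I expect is establishing the Lipschitz bound for $KV$ cleanly under only a \emph{second-moment} assumption on $Df$ and $D^2f$ rather than almost-sure bounds: the naive estimate $\abs{KV(x)-KV(y)}\leq\E[\abs{Df(x)-Df(y)}V(f(x))]+\E[Df(y)\abs{V(f(x))-V(f(y))}]$ needs $\E[D^2f\cdot\norm{x-y}]$ and $\E[Df\cdot\mathrm{Lip}(V)\cdot Df\cdot\norm{x-y}]$, the latter requiring $\E[Df^2]<\infty$ — so the hypothesis is exactly tuned for this, but one must be careful that the ``local'' Lipschitz constants $Df(x)$ genuinely bound global increments along segments in the convex-or-at-least-connected domain $\X$, which is where an integration-along-paths argument (as in the proof of Theorem~\ref{exponential_bound}) is needed. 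A secondary subtlety is the precise $\delta$-dependence of the net event: a union bound over a minimal $\epsilon$-cover gives the stated $1/\delta$ factor in $M$, but one should check this is not off by a $\log(1/\delta)$ that would need absorbing.
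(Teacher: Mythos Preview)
Your discretization half is essentially the paper's argument: establish a Lipschitz bound for $g=KV-V+U$ (the paper gets $\tilde L=DV\cdot\E Df^2+\sup V\cdot\E D^2f+DV+DU$, matching your product-rule sketch) and then a covering/coupon-collector argument on $[0,1]^d$ to get $M=O(\log(1/\epsilon)/(\delta\epsilon^d))$.

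The Monte Carlo half has a genuine gap. Your decomposition bounds $\sup_{\mathcal M}g-\sup_{\mathcal M}\hat g_N$ by $\sup_{\mathcal M}(g-\hat g_N)$, which forces you to control the deviation at \emph{every} grid point and hence to take a union bound; that is what drives your $N=O(M/(\delta\epsilon^2))$. Your proposed rescue via Bernstein/sub-Gaussian tails is not available under hypothesis (iii), which gives only $\E Df^2<\infty$ --- no MGF, no sub-exponential control --- so ``truncate at a high quantile'' or ``assume a bounded MGF'' adds assumptions the theorem does not make. The paper avoids the union bound altogether by the sharper (and obvious in hindsight) inequality
\[
\sup_{x\in\mathcal M}g(x)-\sup_{x\in\mathcal M}\hat g_N(x)\;\le\;g(x_{\mathcal M})-\hat g_N(x_{\mathcal M}),\qquad x_{\mathcal M}\defeq\arg\max_{x\in\mathcal M}g(x).
\]
Since $g$ is a fixed (deterministic) function, $x_{\mathcal M}$ is $\mathcal M$-measurable and independent of the $f_k$'s used to form $\hat K_N$; conditionally on $\mathcal M$ a \emph{single} Chebyshev application at $x_{\mathcal M}$ gives $P(g(x_{\mathcal M})-\hat g_N(x_{\mathcal M})>\epsilon/2\mid\mathcal M)\le 4\sup V^2\,\E Df^2/(N\epsilon^2)$, uniformly in $\mathcal M$, and hence $N=O(1/(\delta\epsilon^2))$ with no union bound and no extra moment assumptions. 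Replace your $\sup_{\mathcal M}(g-\hat g_N)$ step with this single-point argument and the proof goes through exactly as stated.
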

Since the exponential rate of convergence in Theorem \ref{exponential_bound} is $r=1-\inf U/\inf V$, Theorem \ref{complexity} also provides the sample complexity for estimating the exponential rate. Specifically, with probability at least $1-\delta$, the exponential rate $\hat{r}_{M,N}$ computed from $\hat{K}_NV\leq V-U$ on $\mathcal{M}$, which may not be a valid exponential rate, is $\epsilon$-close to a valid exponential rate $r_*$ (given by $KV\leq V-U+\epsilon$ on $\X$). 

It is worth noting that in terms of sample complexity, Theorem \ref{complexity} guarantees a DCDC certificate (and thus a convergence bound to stationarity) with high probability with an efficient parametric $O(1/N^{1/2})$ rate in terms of the number of samples (namely, the bound holds with high probability up to an error of order $O(1/N^{1/2})$). Once samples are generated, $M=\tilde{O}(1/\epsilon^d)$ points are chosen for the empirical evaluation. Thus, the total complexity (both in terms of number of evaluations and number of samples is $O(1/\epsilon^2)+\tilde{O}(1/\epsilon^d)$. 
A related literature on parametric integration (i.e. learning a Markov transition kernel that maps Lipschitz functions to continuous functions on the $d$-dimensional cube) provides a lower bound of order $\tilde{O}(1/\epsilon^{d})$, \citep{Heinrich}. Although these results are suggestive, they cannot be applied directly because we assume a random mapping representation, which provides additional structure. We plan to study the lower bounds in future work. 

\section{Numerical Examples}
\subsection{Mini-batch SGD for logistic regression with regularization}
Having established the theoretical foundation of DCDC, we now utilize it to generate convergence bounds for Markov chains of interest that are too hard for pen-and-paper analysis. To begin, we bound the convergence of a constant step-size mini-batch SGD that minimizes the cross-entropy loss over a finite dataset with $L_2$ regularization.

Let $(x_1,y_1)$,\dots,$(x_m,y_m)$ be $m$ data points where $x_i\in[-1/2,1/2]^2$ and $y_i\in\{0,1\}$. To perform regularized logistic regression, we want to choose $b\in\R^2$ to minimize
\[
-\frac{1}{m}\sum_{i=1}^m (y_i\log p_i+(1-y_i)\log(1-p_i))+\frac{\lambda}{2m}\norm{b}^2,\;\;p_i=\sigma(b^\top x_i)=\frac{1}{1+\exp(-b^\top x_i)}
\]
where $\lambda>0$ is the regularization parameter. The random mapping representation of the corresponding SGD with step-size $\alpha$ and batch-size $\beta$ is
\begin{align*}f(b)
=&b(1-\lambda\alpha/m)+(\alpha/\beta)\sum_{i\in B}\sbk{y_i-\sigma(b^\top x_i)}x_i
\end{align*}
where $B$ with $|B|=\beta$ is uniformly sampled from $\{1,\dots,m\}$. 
Thanks to the regularization, the chain has a compact absorbing set.
In fact, the chain can not escape from $B_{m/(\lambda\sqrt{2})}(0)$.
The local Lipschitz constant of $f(b)$ is
\[
Df(b)=\norm{(1-\lambda\alpha/m)I-(\alpha/\beta)\sum_{i\in B}\sigma'(b^\top x_i)x_ix_i^\top}\leq 1-\lambda\alpha/m
\]
where $\norm{A}=\sup_{v:\norm{v}=1}\norm{Av}$ is the spectral norm. This demonstrates that the $L_2$ regularization makes the chain contractive $\norm{f(b_1)-f(b_2)}\leq(1-\lambda\alpha/m)\norm{b_1-b_2}$. However, since the regularization parameter is chosen via cross-validation in a separate validation process, it is useful to obtain a contraction rate that is uniform in the regularization parameter.
This rate is brought by the second term in $Df(b)$ - we refer to this contribution as the \textit{intrinsic} convergence rate. However, it is challenging to analyze the spectrum of this state-dependent data-based random matrix, so we need DCDC. The code is available in the appendix. Each training procedure in this paper was completed within ten minutes on an M2 MacBook Air with 8GB RAM.

For the dataset, we set $m=100$ and uniformly generate 100 $x_i$'s. For each $x_i$, its label $y_i$ follows $\mathrm{Ber}(0.9)$ or $\mathrm{Ber}(0.1)$, depending upon which coordinate of $x_i$ is larger. For the SGD, we set the regularization parameter $\lambda=1$, step-size $\alpha=0.1$, and batch-size $\beta=10$. For DCDC, we run 1M Adam steps to train a single-layer network with width 1000 and sigmoid activation. We also experiment with deeper networks with the same amount of neurons, and the results are similar. 

As demonstrated in Figure \ref{logistic}, the single-layer network can already accurately solve the CDE $KV=V-0.1$. The learned solution $\tilde{V}$ is on the left while the estimated difference $\hat{K}\tilde{V}-\tilde{V}$ is on the right. Aiming at $KV\leq V-0.1$, we get $\hat{K}\tilde{V}\leq\tilde{V}-0.0986$. This leads to exponential rate $1-1.07\times10^{-3}$ (Theorem \ref{exponential_bound}) where $1\times 10^{-3}$ corresponds to the regularization contribution, while $7\times10^{-5}$ corresponds to the intrinsic rate. Now we briefly discuss how the surface in Figure \ref{logistic} (left) leads to the intrinsic rate. From the expression of $Df(b)$, we know that the intrinsic contraction concentrates around the center. To make it contribute to the overall convergence, it needs to be {\it spread}. The surface in Figure \ref{logistic} (left) provides the media to spread: (i) for points not at the center, the sunken surface creates a drift $\E_x V(X_1)<V(x)$; (ii) however, for points at the center, the sunken surface creates an anti-drift $\E_x V(X_1)>V(x)$, but it is overcome by the strong contraction $\E_x Df_1(x)V(X_1)<V(x)$. In this way, the strong contraction is spread (in the form of drift) to overall improve the contractive drift, which leads to the intrinsic rate. To conclude this example, when $X_0=0$, we compute the pre-multiplier $C=8.1$, which leads to convergence bound $W(X_n,X_\infty)\leq8.1(1-1.07\times 10^{-3})^n$.
\begin{figure}[ht]
\label{logistic}
    \centering
    \begin{minipage}{0.5\textwidth}
        \centering
        \includegraphics[width=0.8\linewidth]{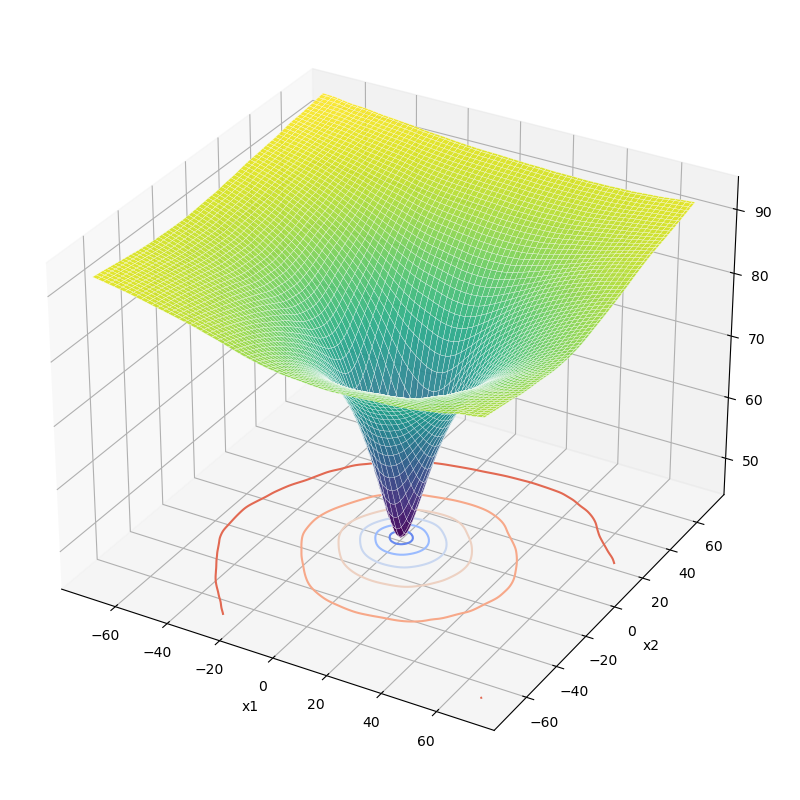}
    \end{minipage}%
    \begin{minipage}{0.5\textwidth}
        \centering
        \includegraphics[width=0.8\linewidth]{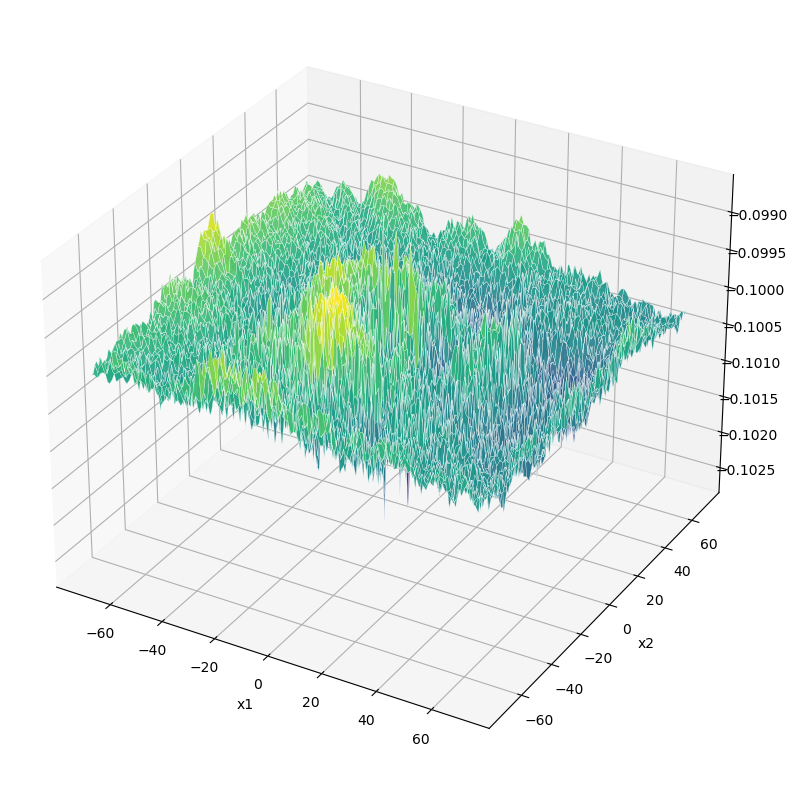}
    \end{minipage}
    \caption{Left: The learned solution $\tilde{V}$ of $KV-V=-0.1$ for the mini-batch SGD, with maximum 91.87. Right: The estimated difference $\hat{K}\tilde{V}-\tilde{V}$, with maximum -0.0986, mean -0.9999, standard deviation 0.0003.}
\end{figure}
\subsection{Tandem fluid networks}
In the above SGD example, contraction plays the leading role. Now we consider a tandem fluid network \citep{kella1992} where drift plays the leading role. Let $s_1$ and $s_2$ be two stations with buffer capacity $c$ that can process fluid workload at rates $r_1$ and $r_2$, respectively. External fluid only arrives at $s_1$ and is processed by $s_1$ then $s_2$.
Assume that the external input follows a compound renewal process where a random amount of fluid $Z$ arrives after a random length of time $T$ has passed since the last arrival. 
If $r_1<r_2$, then $s_2$ is always empty, so we let $r_1>r_2$.
Let $X$ be the remaining workload vector after each arrival. Its random mapping representation is 
\[
f(x_1,x_2)=(\min((x_1-r_1T)_++Z,c),(\min(x_2+(r_1-r_2)\min(T,x_1/r_1),c)-r_2(T-x_1/r_1)_+)_+)
\]
where $x_1$ decreases at rate $r_1$ until it is empty while $x_2$ increases at rate $(r_1-r_2)$ until $x_1$ is empty. Basically, within $[0,c]^2$, the chain follows a northwest-then-south path for time $T$ and then jumps east by amount $Z$.
This chain has simple local Lipschitz constant $Df(x_1,x_2)=I(T\leq(x_1+x_2)/r_2)$, obtained as an infinitesimal ball around $(x_1,x_2)$ collapses to a single point when the system is depleted before the next arrival. As a result, drift plays the leading role as contraction only happens around the origin.

For the tandem network, we set the buffer capacity $c=1$, processing rates $(r_1,r_2)=(1.1,1.0)$, interarrival time $T\sim U[0,0.2]$ and arriving amount $Z\sim U[0,0.1]$ (the stability condition is $\E Z<r_2\E T$). For DCDC, we run 1M Adam steps to train a double-layer network with width 40 and sigmoid activation.

Although a slightly deeper network is trained, the result in Figure \ref{tandem} (left) is almost a plane. Now we briefly explain why this is the correct solution. 
First, note that as long as the stability condition $\E Z<r_2\E T$ holds, the total workload $\bar{V}(x_1,x_2)=x_1+x_2$ is the most natural Lyapunov function such that $\E_x\bar{V}(X_1)-\bar{V}(x)=\E(-r_2T+Z)<0$ holds when $x$ is far away from the boundary.
Second, the "boundary removal technique" introduced in \citep{qu2023computable} shows that the above drift can be extended to the boundary as a contractive drift $\E_xDf(x)\bar{V}(X_1)-\bar{V}(x)=\E(-r_2T+Z)<0$ as if the boundary (that causes anti-drift) never exists. The plane in Figure \ref{tandem} (left) demonstrates that DCDC has already mastered the above two steps! Again, we conclude this example with convergence bound $W(X_n,X_\infty)\leq5.67(1-0.017)^n$ when $X_0=0$.
\begin{figure}[ht]
\label{tandem}
    \centering
    \begin{minipage}{0.5\textwidth}
        \centering
        \includegraphics[width=0.8\linewidth]{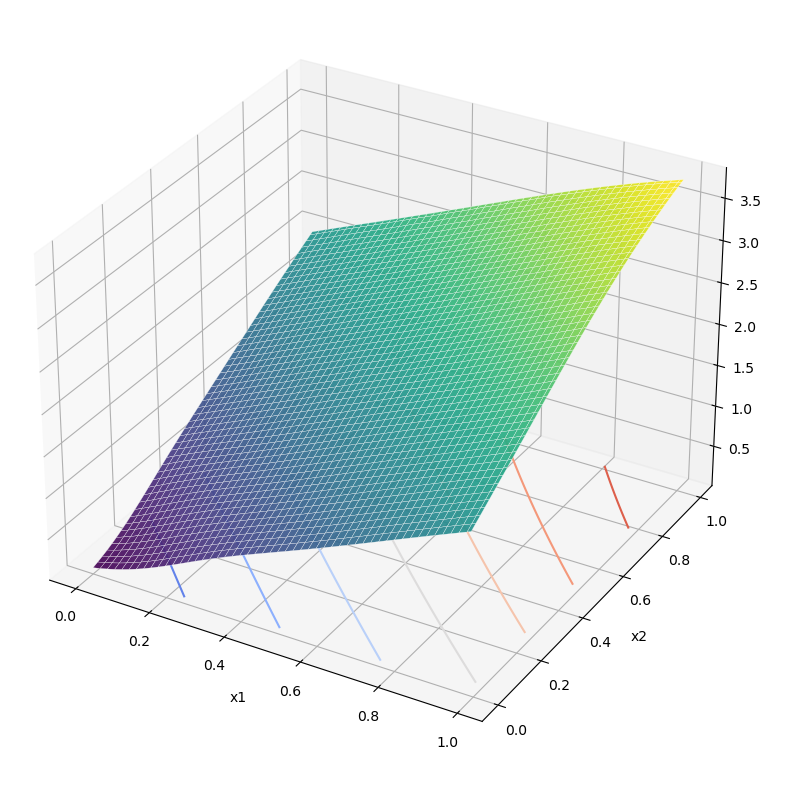}
    \end{minipage}%
    \begin{minipage}{0.5\textwidth}
        \centering
        \includegraphics[width=0.8\linewidth]{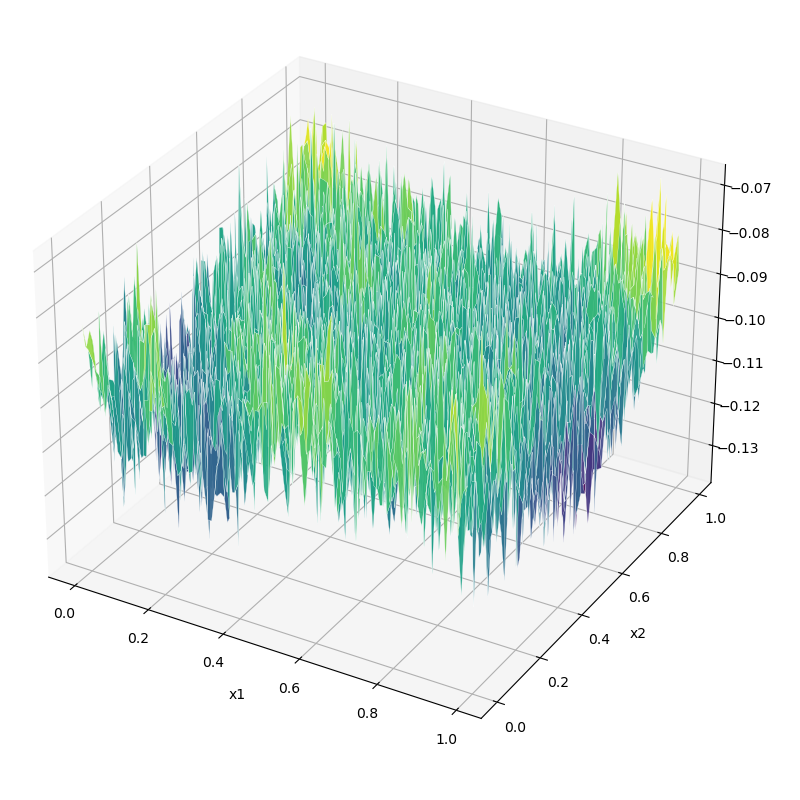}
    \end{minipage}
    \caption{Left: The learned solution $\tilde{V}$ of $KV-V=-0.1$ for the tandem network, with maximum 3.78. Right: The estimated difference $\hat{K}\tilde{V}-\tilde{V}$, with maximum -0.0668, mean -0.0989, standard deviation 0.0097.}
\end{figure}
\subsection{Discovery of meaningful wedge-like Lyapunov functions}
\label{discovery}
Lyapunov functions are usually denoted by $V$ in the literature, and $V$ typically represents the shape of Lyapunov functions. As mentioned in the introduction, the CDE solution is a new type of Lyapunov function. In most cases, it is also $V$-shaped, representing the drift towards some contractive region. However, Markov chains sometimes exhibit neither drift nor contraction, such as when SGD is stuck in a non-strongly-convex basin or when the water level of the Moran dam \citep{stadje1993new} is neither too low nor too high. Here, we use the simplest example, a two-sided regulated random walk $f(x)=\max(\min(x+Z,1/2),-1/2)$ with $Z\sim U[-1/3,1/3]$, to illustrate that DCDC discovers upside-down \rotatebox[origin=c]{180}V-shaped Lyapunov functions to address the above issue.

In $[-1/6,1/6]$, the chain exhibits neither drift ($Z$ is symmetric) nor contraction ($\pm1/2$ boundaries are not reachable in one step). The wedge in Figure \ref{randomwalk} (left) creates an artificial drift to maintain the CD. In \citep{qu2023computable}, a similar function is introduced as a tool to study stylized non-strongly-convex SGD. DCDC not only discovers this tool but also makes the wedge meaningful. As mentioned in the remark below Theorem \ref{CDE_exist}, the CDE solution generated by DCDC represents an average space-discounted cumulative reward, where an agent collects reward within a shrinking ball. Why does starting from the middle lead to the highest reward? Because the ball starting there has the longest lifespan before hitting the boundary and collapsing into a single point.
\begin{figure}[ht]
\label{randomwalk}
    \centering
    \begin{minipage}{0.5\textwidth}
        \centering
        \includegraphics[width=0.8\linewidth]{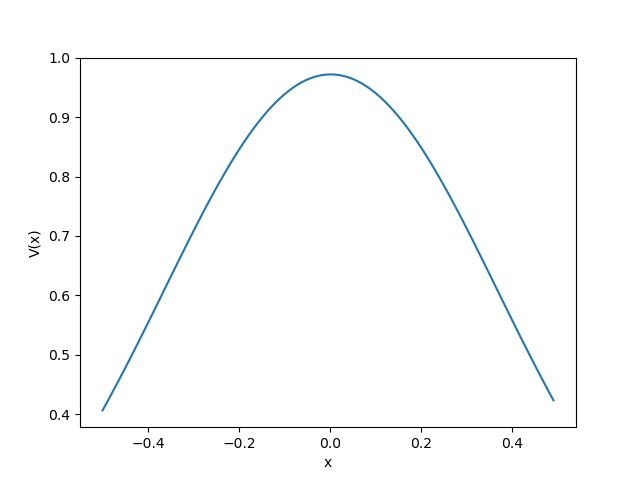}
    \end{minipage}%
    \begin{minipage}{0.5\textwidth}
        \centering
        \includegraphics[width=0.8\linewidth]{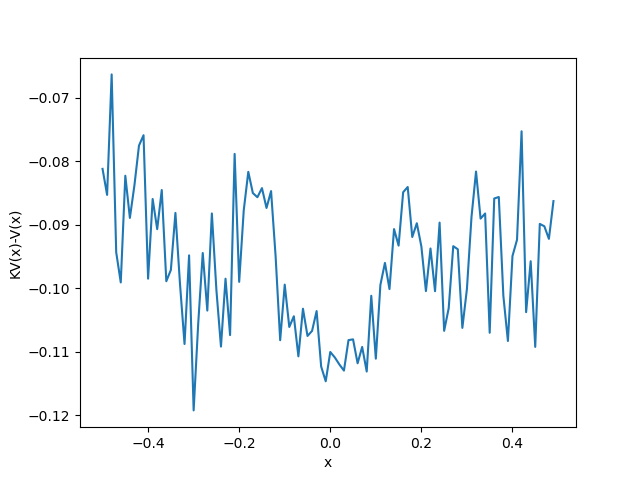}
    \end{minipage}
    \caption{Left: The learned solution $\tilde{V}$ of $KV-V=-0.1$ for the regulated random walk, with maximum 0.972. Right: The estimated difference $\hat{K}\tilde{V}-\tilde{V}$, with maximum -0.0662, mean -0.0964, standard deviation 0.0106.}
\end{figure}
\section{Conclusions}
We introduce DCDC, a potent framework that enables the use of deep learning techniques to tackle the problem of estimating convergence to stationarity of complex, general state-space Markov chains. Our approach unlocks the key to using scalable data-driven tools to tackle this important problem. In future work, we plan to use these results in the context of general state-space reinforcement learning and control of ergodic systems and related applications. 

\begin{ack}
The material in this paper is partly supported by the Air Force Office of Scientific Research under award number FA9550-20-1-0397 and ONR 13983111. Support from NSF 2229012, 2312204, 2403007 is also gratefully acknowledged.

\end{ack}

\bibliographystyle{plainnat}
\bibliography{references}

\newpage
\appendix

\section*{Appendix}
\begin{proof}[Proof of Theorem \ref{CDE_exist}]
Note that
\begin{align*}
    W_*(x)\geq&KW_*(x)+U(x)\\
    =&\E Df_1(x)W_*(f_1(x))+U(x)\\
    \geq&\E Df_1(x)(KW_*(f_1(x))+U(f_1(x)))+U(x)\\
    =&\E Df_1(x)\E\sbk{Df_2(f_1(x))W_*(f_2(f_1(x)))|f_1}+\E Df_1(x)U(f_1(x))+U(x)\\
    =&\E Df_1(x)Df_2(f_1(x))W_*(f_2(f_1(x)))+\E Df_1(x)U(f_1(x))+U(x)\\
    &\cdots\\
    \geq&\E_x\sbk{W_*(X_n)\prod_{k=1}^n Df_k(X_{k-1})}+\sum_{k=0}^{n-1}\E_x\sbk{U(X_k)\prod_{l=1}^k Df_l(X_{l-1})}\\
    \geq&\sum_{k=0}^{n-1}\E_x\sbk{U(X_k)\prod_{l=1}^k Df_l(X_{l-1})}.
\end{align*}
As $n\gti$, we have $V_*\leq W_*<\infty$ and 
\begin{align*}
    KV_*(x)&=\E Df_1(x)\E_{X_1}\sbk{\sum_{k=0}^\infty U(X_{k+1})\prod_{l=1}^{k}Df_{l+1}(X_{l})}\\
    &=\E \sbk{\sum_{k=0}^\infty U(X_{k+1})\prod_{l=0}^{k}Df_{l+1}(X_{l})}\\
    &=\E \sbk{\sum_{k=1}^\infty U(X_{k})\prod_{l=1}^{k}Df_{l}(X_{l-1})}\\
    &=V_*(x)-U(x).
\end{align*}
Let $V^*$ be another solution of $KV=V-U$. Similar to $W_*$,
\begin{equation}
\label{another_V}
V^*(x)=\E_x\sbk{V^*(X_n)\prod_{k=1}^n Df_k(X_{k-1})}+\sum_{k=0}^{n-1}\E_x\sbk{U(X_k)\prod_{l=1}^k Df_l(X_{l-1})}.
\end{equation}
As $n\gti$, we have $V^*\geq V_*.$
If $V_*$, and hence $V^*$, is unbounded, then $KV=V-U$ doesn't have bounded solution. If $V^*$, and hence $V_*$, is bounded, we claim that they are the same solution. It suffices to show that the first term on the RHS of \eqref{another_V} vanishes as $n\gti.$ This is true because
\[
V_*(x)<\infty\;\Rightarrow\;\E_x\sbk{U(X_n)\prod_{k=1}^nDf_k(X_{k-1})}\gtz\;\Rightarrow\;\E_x\sbk{V^*(X_n)\prod_{k=1}^nDf_k(X_{k-1})}\gtz
\]
where the last step follows from $\sup V^*<\infty$ and $\inf U>0$.
\end{proof}

\begin{proof}[Proof of Theorem \ref{valid}]
    Since continuous functions on compacts sets are bounded, by Theorem \ref{CDE_exist}, $V_*$ is the unique continuous solution of $KV=V-U.$
    By the universal approximation theorem \citep{cybenko1989approximation}, there exists a single-layer neural network with sigmoid activation $\{V_\theta:\theta\in\Theta\}$ ($\Theta$ is some Euclidean space) and its realization $V_{\theta_*}$ such that $\norm{V_{\theta_*}-V_*}_\infty<\epsilon/(\bar{L}+1)$ where $\bar{L}=\norm{\E Df}_\infty$. Then \begin{align*}
        \norm{KV_{\theta_*}-V_{\theta_*}+U}_\infty\leq&\norm{KV_*-V_*+U}_\infty+\norm{V_{\theta_*}-V_*}_\infty+\norm{KV_{\theta_*}-KV_*}_\infty\\
        <&0+\epsilon/(\bar{L}+1)+\bar{L}\epsilon/(\bar{L}+1)\\
        =&\epsilon.
    \end{align*}
\end{proof}

\begin{proof}[Proof of Theorem \ref{exponential_bound}]
The setting introduced in Section \ref{CDE_sec} is a special case of the setting in \citep{qu2023computable}, allowing us to directly invoke the results from that work. In particular, as $\X$ is assumed to be convex, the intrinsic metric used in \citep{qu2023computable} reduces to the Euclidean metric. 
From $KV\leq V-U$, we have
\[
KV\leq V-U\leq V-U\cdot V/\sup V\leq rV,\;\;r=1-\inf U/\sup V.
\]
By Theorem 3 in \citep{qu2023computable},
\begin{align*}
    W(X_n,X_\infty)\leq\frac{1}{\inf V}\frac{r^n}{1-r}\E\norm{X_0-X_1}V(X_0+\tilde{U}(X_1-X_0))=Cr^n
\end{align*}
where 
\[
C=\frac{\E\norm{X_0-X_1}V(X_0+\tilde{U}(X_1-X_0))}{\inf U\cdot(\inf V/\sup V)}
\]
and $\tilde{U}$ is a $U[0,1]$ random variable independent of $X_0$ and $X_1$.
\end{proof}

\begin{proof}[Proof of Theorem \ref{polynomial_bound}]
    The proof is similar to the proof of Theorem 1 in \citep{qu2023computable}, but in our specific setting, the proof becomes much simpler notation-wise.
    As in the proof of our Theorem \ref{CDE_exist}, we have
    \[
    V_1(x)\geq\sum_{n=0}^\infty K^nV_0(x),\;\;K^nV_0(x)=\E_x\sbk{V_0(X_n)\prod_{l=1}^n Df_{l}(X_{l-1})}.
    \]
    Following the same induction process as in \citep{qu2023computable}, we have
    \[V_m(x)\geq \sum_{n=0}^\infty\binom{n+m-1}{m-1} K^nV_0(x).\]
    Let $F_n=f_n\circ\dots\circ f_1$ and $\bar{F}_n=f_1\circ\dots\circ f_n$ be the $n$-fold forward and backward composition, respectively. Given $f$ independent of anything else, 
    \begin{align*}
        \int_x^{f(x)}V_m(y)dy\geq& \sum_{n=0}^\infty\binom{n+m-1}{m-1} \int_x^{f(x)}K^nV_0(y)dy\\
        =& \sum_{n=0}^\infty\binom{n+m-1}{m-1} \E\int_x^{f(x)}\sbk{V_0(F_n(y))\prod_{l=1}^n Df_{l}(F_{l-1}(y))}dy\\
        \geq&\sum_{n=0}^\infty\binom{n+m-1}{m-1} \E\int_{F_n(x)}^{F_n(f(x))}V_0(z)dz\\
        \geq&\sum_{n=0}^\infty\binom{n+m-1}{m-1} \E\int_{\bar{F}_n(x)}^{\bar{F}_{n+1}(x)}V_0(z)dz
    \end{align*}
    For a particular $\bar{n}$,
    \begin{align*}
        \int_x^{f(x)}V_m(y)dy\geq&\sum_{n=\bar{n}}^\infty\binom{n+m-1}{m-1} \E\int_{\bar{F}_n(x)}^{\bar{F}_{n+1}(x)}V_0(z)dz\\
        \geq&\binom{\bar{n}+m-1}{m-1}\E\int_{\bar{F}_{\bar{n}}(x)}^{\bar{F}_\infty(x)}V_0(z)dz\\
        \geq&\binom{\bar{n}+m-1}{m-1}\inf V_0\cdot\E\norm{\bar{F}_n(x)-\bar{F}_\infty(x)}.
    \end{align*}
    By integrating with respect to the initial distribution $X_0$,
    \begin{align*}
        \E\norm{X_0-X_1}V_m(X_0+\tilde{U}(X_1-X_0))\geq&\binom{\bar{n}+m-1}{m-1}\inf V_0\cdot\E\norm{\bar{F}_{\bar{n}}(X_0)-\bar{F}_\infty(X_0)}\\
        \geq&\binom{\bar{n}+m-1}{m-1}\inf V_0\cdot W(X_{\bar{n}},X_\infty)\\
        =&\frac{(\bar{n}+m-1)\dots(\bar{n}+1)}{(m-1)\dots1}\inf V_0\cdot W(X_{\bar{n}},X_\infty)\\
        =&\prod_{k=1}^{m-1}\prs{\frac{\bar{n}}{k}+1}\inf V_0\cdot W(X_{\bar{n}},X_\infty).
    \end{align*}
\end{proof}

\begin{proof}[Proof of Theorem \ref{complexity}]
As $\X$ is compact, without loss of generality, let $\X=[0,1]^d$. 
The main goal is to find $M,N$ such that
\[
P\prs{\sup_{x\in\X}\sbk{KV(x)-V(x)+U(x)}>\sup_{x\in\mathcal{M}}\sbk{\hat{K}_NV(x)-V(x)+U(x)}+\epsilon}\leq\delta.
\]
This probability is bounded by the sum of 
\begin{equation}
\label{1of2}
P\prs{\sup_{x\in\X}\sbk{KV(x)-V(x)+U(x)}>\sup_{x\in\mathcal{M}}\sbk{KV(x)-V(x)+U(x)}+\epsilon/2}
\end{equation}
and
\begin{equation}
\label{2of2}
P\prs{\sup_{x\in\mathcal{M}}\sbk{KV(x)-V(x)+U(x)}>\sup_{x\in\mathcal{M}}\sbk{\hat{K}_NV(x)-V(x)+U(x)}+\epsilon/2}.
\end{equation}
To bound \eqref{1of2}, we need to bound the Lipschitz constant of $KV-V+U$.
For $x,y\in\X$,
\begin{align*}
    &\abs{KV(x)-KV(y)}\\
    =&\abs{\E Df(x)V(f(x))-\E Df(y)V(f(y))}\\
    \leq&\abs{\E Df(x)V(f(x))-\E Df(x)V(f(y))}+\abs{\E Df(x)V(f(y))-\E Df(y)V(f(y))}\\
    \leq&\E Df(x)\abs{V(f(x))-V(f(y))}+\E\abs{Df(x)-Df(y)}V(f(y))\\
    \leq&DV\cdot\E Df^2\cdot\norm{x-y}+\sup V\cdot\E D^2f\cdot\norm{x-y}
\end{align*}
where $DV$ is the Lipschitz constant of $V$. Then, the Lipschitz constant of $KV-V+U$ is bounded by
\[
\tilde{L}\defeq DV\cdot\E Df^2+\sup V\cdot\E D^2f+DV+DU.
\]
Then \eqref{1of2} is bounded by $P\prs{[0,1]^d\not\subset\cup_{x\in\mathcal{M}}B_{\tilde{r}}}$ where $\tilde{r}=\epsilon/(2\tilde{L})$.
To bound this probability, we divide the unit cube into $\tilde{C}=(\sqrt{d}/\tilde{r})^d=(2\tilde{L}\sqrt{d}/\epsilon)^d$ sub-cubes with edge length $\tilde{r}/\sqrt{d}$. Then $[0,1]^d\not\subset\cup_{x\in\mathcal{M}}B_{\tilde{r}}$ implies that there exists at least one sub-cube that does not contain any element of $\mathcal{M}$. This is equivalent to failing to collect $\tilde{C}$ different coupons within $M$ draws. It is well-known that we need $\Theta(\tilde{C}\log\tilde{C})$ on average to collect $\tilde{C}$ different coupons. By Markov inequality, we can choose $M=O(\tilde{C}\log\tilde{C}/\delta)=O(\log(1/\epsilon)/(\delta\epsilon^d))$ to reduce the failure probability below $\delta/2.$

To bound \eqref{2of2}, let $x_\mathcal{M}=\arg\max_{x\in\mathcal{M}}\sbk{KV(x)-V(x)+U(x)}$. By Chebyshev inequality,
\begin{align*}
    &P\prs{KV(x_\mathcal{M})-V(x_\mathcal{M})+U(x_\mathcal{M})>\sup_{x\in\mathcal{M}}\sbk{\hat{K}_NV(x)-V(x)+U(x)}+\epsilon/2\Big|\mathcal{M}}\\
    \leq&P\prs{KV(x_\mathcal{M})-V(x_\mathcal{M})+U(x_\mathcal{M})>\hat{K}_NV(x_\mathcal{M})-V(x_\mathcal{M})+U(x_\mathcal{M})+\epsilon/2\Big|\mathcal{M}}\\
    =&P\prs{KV(x_\mathcal{M})>\hat{K}_NV(x_\mathcal{M})+\epsilon/2\Big|\mathcal{M}}\\
    \leq&\frac{\var\sbk{Df(x_\mathcal{M})V(f(x_\mathcal{M}))|\mathcal{M}}/N}{\epsilon^2/4}\\
    \leq&\frac{4\sup V^2\cdot\E Df^2}{N\epsilon^2}\\
    \leq&\delta/2
\end{align*}
when $N\geq (8\sup V^2\cdot\E Df^2)/(\delta\epsilon^2)=O(1/(\delta\epsilon^2))$.

\end{proof}


\end{document}